\documentclass{article}
\usepackage{iclr2027_conference,times}

\usepackage{amsmath,amssymb,amsthm,mathtools}
\usepackage{enumitem}
\usepackage{hyperref}
\usepackage{cleveref}
\usepackage{tcolorbox}
\usepackage{tikz}
\usetikzlibrary{automata, positioning, calc}
\usepackage{tikz-cd}
\usepackage{stmaryrd}
\usepackage{braket}
\usepackage{algorithm}
\usepackage{algpseudocode}
\usepackage{xurl}
\usepackage{caption}

\usepackage{pgfplots}
\pgfplotsset{compat=1.18}
\usepgfplotslibrary{groupplots} 

\theoremstyle{definition}
\newtheorem{definition}{Definition}
\newtheorem{example}[definition]{Example}

\theoremstyle{plain}
\newtheorem{theorem}[definition]{Theorem}

\newtheorem{corollary}[definition]{Corollary}

\theoremstyle{remark}
\newtheorem{remark}[definition]{Remark}

\newtcolorbox{todo}{
  colback=orange!10,
  colframe=orange!80!black,
  title={\textbf{TODO}},
  fonttitle=\bfseries
}

\newtcolorbox{commentary}{
  colback=blue!5,
  colframe=blue!60!black,
  title={\textbf{Commentary}},
  fonttitle=\bfseries
}

\newcommand{\F}{\mathcal{F}}

\renewcommand{\d}{\mathrm{d}}
\newcommand{\dx}{\d x}
\newcommand{\dy}{\d y}
\newcommand{\dz}{\d z}

\newcommand{\ds}{\d s}

\newcommand{\R}{\mathbb{R}}

\newcommand{\E}{\mathbb{E}}

\newcommand{\Bern}{\mathbf{Bern}}

\newcommand{\Exp}{\mathbf{Exp}}

\newcommand{\SparseLU}{{\color{blue}\textsf{\small SparseLU}}}
\newcommand{\BiCGSTAB}{{\color{blue}\textsf{\small BiCGSTAB}}}
\newcommand{\LogDerivative}{{\color{red}\textsf{\small LogDerivative}}}
\newcommand{\UniformPath}{{\color{olive}\textsf{\small UniformPath}}}
\newcommand{\Jacobi}{{\color{teal}\textsf{\small Jacobi}}}

\newcommand{\mupath}{\mu^{\text{path}}}
\newcommand{\muout}{\mu^{\text{out}}}

\newcommand{\prob}{\mathsf{p}}

\renewcommand{\epsilon}{\varepsilon}

\newcommand{\pMCsimpleexample}{
  \begin{tikzpicture}[->, node distance=2.5cm, auto]
    \node[state, initial, initial text=] (i) {$i$};
    \node[state, below of=i] (j) {$j$};
    \node[right of=j] (good) {$+1$};
    \node[right of=i] (bad) {$-1$};

    \path 
      (i) edge [loop, out=115, in=65, distance=.8cm] node {$1-p$} (i)
      (i) edge [] node {$\frac{p}{2}$} (bad)
      (i) edge [left] node {$\frac{p}{2}$} (j)
      (j) edge [] node {$1-p$} (good)
      (j) edge [] node {$p$} (bad)
    ;
  \end{tikzpicture}
}

\newcommand{\pMCmixedexample}{
  \begin{tikzpicture}[->, node distance=3cm, auto]
    \node[state, initial, initial text=] (i) {$i$};
    \node[state, below of=i] (c) {$c$};
    \node[state, right of=c, label=below:{$\vdots$},
          label=above:{$\vdots$}, label=315:{$x\in  \R_+$}
                     ] (x) {$x$};
    \node[state, right of=i] (d) {$d$};
    \node[right of=x] (yes) {$+1$};
    \node[right of=d] (no) {$-1$};

    \path (i) edge [] node {$\frac{1}{2}$} (c);
    \path (i) edge [bend left] node {$\frac{1}{2}$} (d);
    \path (d) edge [bend left] node {$1-\sqrt{p}$} (i);
    \path (c) edge [%dashed
                    ] node {$pe^{-px}$} (x);
    \path (x) edge [above right, pos = .4] node {$1-e^{-\frac{x}{2}}$} (i);
    \path (d) edge [] node {$\sqrt{p}$} (no);
    \path (x) edge [] node {$e^{-\frac{x}{2}}$} (yes);
  \end{tikzpicture}
}

\newcommand{\MCsimpleexample}{
  \begin{tikzpicture}[->, node distance=2.5cm, auto]
    \node[state, initial, initial text=] (i) {$i$};
    \node[state, below of=i] (j) {$j$};
    \node[right of=j] (good) {$+1$};
    \node[right of=i] (bad) {$-1$};

    \path 
      (i) edge [loop, out=115, in=65, distance=.8cm] node {$\frac{2}{3}$} (i)
      (i) edge [] node {$\frac{1}{6}$} (bad)
      (i) edge [left] node {$\frac{1}{6}$} (j)
      (j) edge [] node {$\frac{2}{3}$} (good)
      (j) edge [] node {$\frac{1}{3}$} (bad)
    ;
  \end{tikzpicture}
}

\newcommand{\anotherpMCmixedexample}{
  \begin{tikzpicture}[->, node distance=3cm, auto]
    \node[state, initial, initial text=] (i) {$i$};
    \node[state, below of=i, label=below:{$\vdots$}, label=315:{$n \ge 2$}] (n) {$n$};
    \node[state, right of=c, label=below:{$\vdots$}, label=315:{$x \in [0, \frac{1}{n - 1}]$}] (x) {$x$};
    \node[state, right of=i] (b) {$1$};
    \node[right of=x] (yes) {$+1$};
    \node[right of=d] (no) {$-1$};

    \path (i) edge [] node {$p(1-p)^{n-1}$} (n);
    \path (i) edge [bend left] node {$p$} (b);
    \path (b) edge [bend left] node {$1 - q$} (i);
    \path (n) edge [] node {$n - 1$} (x);
    \path (b) edge [] node {$q$} (no);
    \path (x) edge [] node {$xq$} (yes);
    \path (x) edge [right] node {$1-xq$} (b);
  \end{tikzpicture}
}

\pgfdeclareplotmark{hexagon}{%
  \pgfpathmoveto{\pgfqpointpolar{0}{\pgfplotmarksize}}%
  \pgfpathlineto{\pgfqpointpolar{60}{\pgfplotmarksize}}%
  \pgfpathlineto{\pgfqpointpolar{120}{\pgfplotmarksize}}%
  \pgfpathlineto{\pgfqpointpolar{180}{\pgfplotmarksize}}%
  \pgfpathlineto{\pgfqpointpolar{240}{\pgfplotmarksize}}%
  \pgfpathlineto{\pgfqpointpolar{300}{\pgfplotmarksize}}%
  \pgfpathclose
  \pgfusepathqstroke
}

\pgfdeclareplotmark{concentric}{%
  \pgfpathcircle{\pgfpointorigin}{\pgfplotmarksize}%
  \pgfpathcircle{\pgfpointorigin}{0.5\pgfplotmarksize}%
  \pgfusepathqstroke
}

\input{figures.tex}

\usepackage{hyperref}
\usepackage{url}
\usepackage{cite}

\title{\Huge \begin{center} Log-Derivative Trick for \\ Parametric Markov Chains \end{center}}

\author{\large Arthur Gall\textsuperscript{1,2} \\
\And
\large Ichiro Hasuo\textsuperscript{1,2,3} \\
\And
\large Thomas Seiller\textsuperscript{4} \\
}

\newcommand{\ihcomm}[1]{}
\newcommand{\artcomm}[1]{\marginpar{\color{red} TODO}}

\renewenvironment{abstract}{%
  \vskip 0.1in
  \begin{quote}
    \small                      % Slightly smaller font size
    \emph{Abstract.}    % Bigger, bold, and aligned with abstract text
}{%
    \par
  \end{quote}
  \vskip 1ex
}

\begin{document}

\maketitle

\vspace{-.5cm}
\begin{center}
  \small \textsuperscript{1}National Institute of Informatics, Tokyo, Japan \\ 
  \small \textsuperscript{2}SOKENDAI (The Graduate University for Advanced Studies), Kanagawa, Japan \\
  \small \textsuperscript{3}Imiron Co., Ltd., Tokyo, Japan \\
  \small \textsuperscript{4}CNRS, France
\end{center}
\vspace{.1cm}

\begin{abstract}
  Optimizing expected rewards and reachability probabilities in parametric Markov chains (pMCs) is central to the analysis and synthesis of probabilistic systems with loops.
  While traditional \emph{symbolic synthesis} approaches---such as parameter lifting and rational function generation---suffer from scalability bottlenecks, \emph{gradient-based optimization}  emerges as an effective alternative.
  However, existing gradient-based techniques still rely on \emph{symbolic gradient computation}---such as solving large linear systems---over the global state space, inheriting scalability limits. Finite-horizon approximation is possible but introduces uncontrollable bias.

  In this paper, we introduce a scalable Monte Carlo gradient estimator based on a measure-theoretic generalization of the log-derivative trick. 
  We prove that our estimator provides unbiased gradient estimates under almost-sure termination and standard regularity conditions.
  Crucially, our method avoids global state-space materialization and linear system solving entirely: it operates purely along sampled trajectories, requiring only black-box evaluations of local transition probabilities (or densities) and their derivatives.
  Unlike existing pMC techniques that are fundamentally restricted to finite discrete systems, our measure-theoretic framework naturally extends to continuous and general $\sigma$-finite state spaces.
  Empirical evaluations on synthetic discrete pMCs and continuous-state case studies demonstrate that our method scales to state spaces far beyond the reach of existing solvers, significantly reducing execution time without introducing bias.
\end{abstract}

\section{Introduction}
Solving verification and optimization problems over probabilistic programs
is part of a growing research trend~\citep{
%gordon2014probabilistic,
barthe2020foundations,
BaierK08%
% kaminski2018weakest,chakarov2013probabilistic,
% chatterjee2016algorithmic,abate2021learning,bao2022data
}
with a wide range of applications, including
security~\citep{barthe2012probabilistic,
%albarghouthi2018synthesizing,
roy2021learning},
reliability~\citep{carbin2013verifying,wang2021central}
and machine learning~\citep{
%carpenter2017stan,
bingham2019pyro,
vandemeent2018introduction,lew2023adev}.
\emph{Parametric Markov chains (pMCs)} provide a foundational model for probabilistic systems whose transition probabilities depend on a set of global parameters,
capturing parameter dependencies across states.
Since they naturally support loops, they represent a wide subclass of unconditioned probabilistic programs.

\begin{figure}[htbp]
  \centering
  % --- Subfigure 1: Finite MC ---
  \begin{minipage}[b]{0.26\textwidth}
    \centering
    \resizebox{\linewidth}{!}{\MCsimpleexample}
    \caption{A finite MC $T_{1}$}
    \label{fig:mc_simple}
  \end{minipage}\hfill
  % --- Subfigure 2: Finite pMC ---
  \begin{minipage}[b]{0.26\textwidth}
    \centering
    \resizebox{\linewidth}{!}{\pMCsimpleexample}
    \caption{A finite pMC  $T_{2}$}
    \label{fig:pmc_simple}
  \end{minipage}\hfill
  % --- Subfigure 3: Continuous pMC ---
  \begin{minipage}[b]{0.4\textwidth}
    \centering
    \resizebox{\linewidth}{!}{\pMCmixedexample}
    \caption{A continuous pMC  $T_{3}$}
    \label{fig:pmc_mixed}
  \end{minipage}
\end{figure}

Figure~\ref{fig:mc_simple} presents a finite Markov chain (MC); 
Figure~\ref{fig:pmc_simple} presents a finite pMC with a parameter $p$; 
Figure~\ref{fig:pmc_mixed} presents an infinite pMC with a continuum of states $x\in \R_+$, with parameter $p$. 
Here, $-1$ and $+1$ denote \emph{output states} of the system, carrying designated terminal rewards.

Verification and optimization over pMCs have been extensively studied~\citep{daws2004symbolic,
%hahn2011probabilistic,
junges2024parameter%,
%jansen2022parameter
}, with efficient
algorithms and tools~\citep{%quatmann2016parameter,
dehnert2015prophesy,
%cubuktepe2018synthesis,
cubuktepe2022convex,badings2023efficient,
spel2021finding,heck2025generalized%,hensel2022storm
}
applied to reliability analysis~\citep{volk2016advancing,bartocci2011model},
planning under uncertainty~\citep{junges2018finite,polgreen2016data},
and systems biology~\citep{ceska2017precise}.

\paragraph{Optimization over pMCs}
Given a real reward function defined over the output states of a parametric Markov chain ($\{-1,+1\}$ in Figures~\ref{fig:pmc_simple}--\ref{fig:pmc_mixed}), our goal is to synthesize a parameter valuation that maximizes the expected reward.
Existing techniques broadly fall into two classes. 
\begin{itemize}[nosep, leftmargin=1.5em]
  \item \emph{Symbolic synthesis}: Approaches based on parameter lifting, rational function generation, or feasibility region decomposition~\citep{spel2021finding,dehnert2015prophesy,heck2025generalized} offer rigorous global optimality guarantees by symbolically exploring the parameter space. 
However, their computational complexity scales poorly with the number of parameters; state-of-the-art tools rarely scale beyond dozens of parameters~\citep{heck2025generalized}.
  \item \emph{Gradient descent}: When systems involve higher-dimensional parameter spaces, \emph{gradient descent}, while lacking global optimality guarantees, provides a practical and scalable optimization paradigm. 
Here, the central computational challenge lies in efficiently computing the gradient of the expected reward with respect to the parameters.
\end{itemize}

\paragraph{%
%Exact and Approximate Gradients via State-Space Methods
Linear Solver Method for Gradients, Exact and Approximate
}
Recent work~\citep{Heck22} presents what we call the \emph{linear solver method};
% an approach for computing 
it computes exact gradients in finite pMCs by solving a coupled linear system.
Specifically, for an $n$-state pMC with transition matrix $A$ and reward vector $b$, the expected reward vector $x$ satisfies $x = Ax + b$.
Differentiating this relation with respect to a parameter yields $\partial x = (\partial A) x + A \partial x + \partial b$, which gives:
\begin{equation}\label{eq:linearSystemHeck}
  \partial x = (I - A)^{-1} \left( (\partial A) x + \partial b \right)
  \quad\text{where}\quad
  x = (I - A)^{-1} b.
\end{equation}
Various numerical solvers enable the solution of such linear systems, ranging from direct sparse factorization methods~\citep{davis2006direct} to iterative Krylov subspace algorithms~\citep{saad2003iterative}, which are particularly well-suited for large sparse representations. 
These solvers provide exact gradient evaluations (up to numerical precision), yet this exactness incurs a substantial computational cost on large instances.

One can imagine a natural approximation of the linear solver method, namely the \emph{Jacobi method} applied to~\eqref{eq:linearSystemHeck}. It approximates the gradient by unrolling the linear system for a fixed budget of $K$ steps. However, this finite-horizon truncation introduces uncontrollable bias and fails to capture long-horizon behaviors.

% iterative approach to approximate the solution of~\eqref{eq:linearSystemHeck} is the Jacobi method, which computes the gradient by unrolling the linear system for a fixed budget of $K$ steps. 
% However, this finite-horizon truncation introduces uncontrollable bias and fails to capture long-horizon behaviors.

%Crucially, 
The linear solver method, both exact and approximate, 
%\emph{both} the linear solver and Jacobi iterative method 
%share
comes with two fundamental limitations:
\begin{itemize}[nosep, leftmargin=1.5em]
  \item \textbf{Global state-space materialization}: They require explicitly constructing and storing the full transition matrix $A$ (and its derivative 
%matrices
 $\partial A$), which quickly exhausts memory on large models.
  \item \textbf{Restriction to finite discrete systems}: They fundamentally operate on discrete adjacency matrices, rendering them inapplicable to continuous or general state spaces such as Figure~\ref{fig:pmc_mixed}.
\end{itemize}

\paragraph{Our Method: Sampling Gradients via the Log-Derivative Trick}

In this paper, we present a new algorithm that samples the gradient via the \emph{log-derivative trick}~\citep{williams1992simple}, operating entirely along sampled trajectories without state-space materialization and naturally extending to infinite and continuous state spaces.
Note first that the expected reward $\E[r]$ over paths $\pi$ is given by the formula
\[
  \E[r]
  = \sum_{\pi \in \text{Path}} {\color{red} \prob_{\pi}} {\color{blue} r_\pi}
  = \sum_{\pi \in \text{Path}} {\color{red} \prob_{\pi_1} \dots \prob_{\pi_{|\pi|}}} {\color{blue} r_\pi},
\]
where $\prob_{\pi_{i}}$ is the probability of taking the branch $\pi_{i}$, and $\prob_{\pi}$ is that for the path $\pi$. 
We assume that the reward function does not depend on the parameters, i.e.\ $\partial r_\pi = 0$ (for simplicity; lifting this is easy). Then 
differentiation $\partial$ (with respect to parameters) yields
\[
  \partial \E[r]
  = \sum_{\pi \in \text{Path}} \partial \left( \prob_{\pi_1} \dots \prob_{\pi_{|\pi|}} \right) r_\pi
  = \sum_{\pi \in \text{Path}} {\color{red} \prob_{\pi_1} \dots \prob_{\pi_{|\pi|}}} {\color{blue} \partial \log \left( \prob_{\pi_1} \dots \prob_{\pi_{|\pi|}} \right) r_\pi} 
  = \E[\partial \log \prob_{\pi} \cdot r],
\]
where the second equality used the log-derivative trick ($\partial \log f=\partial f/f$). 
The last expression $\E[\partial \log \prob_{\pi} \cdot r]$ implies that the gradient can be estimated by sampling paths $\pi$ directly from the parametric Markov chain
and, by the product rule of logarithm, accumulating the sum of the log-derivatives of the transition probabilities scaled by the reward of the path. Indeed,
\[
  {\color{blue}\partial \log \left( \prob_{\pi_1} \dots \prob_{\pi_{|\pi|}} \right) = \partial \log \prob_{\pi_1} + \dots + \partial \log \prob_{\pi_{|\pi|}}}.
\]
Our algorithm operates as a Monte Carlo gradient estimator~\citep{mohamed2020montecarlogradientestimation},
reducing computational cost at the price of increased variance.
As a theoretical result, we prove that our sampling algorithm provides unbiased gradient estimates under almost-sure termination and standard regularity conditions. 
We also demonstrate experimentally that,
despite the potentially unbounded path lengths arising from cycles,
our algorithm remains scalable and effective in practice for computing an approximation of the gradient. We show that our algorithm is suitable for gradient descent in pMCs and outperforms other methods such as the linear solver method of~\citet{Heck22}. 

Beyond finite systems, our framework naturally generalizes to infinite and continuous state spaces (such as the model in Figure~\ref{fig:pmc_mixed}), which lie entirely outside the scope of methods that require explicit state spaces (such as symbolic synthesis and the linear solver methods).
%state-space methods since they fundamentally rely on finite representations. 
As another theoretical contribution, we generalize the log-derivative trick via the \emph{Radon--Nikodym derivative},  and show that our estimator remains unbiased for any $\sigma$-finite transition structure under almost-sure termination and standard regularity conditions.

Summarizing, the main contribution of this paper is a trajectory-based gradient estimator algorithm for parametric Markov chains. It features the following theoretical and practical characteristics:
\begin{itemize}[nosep, leftmargin=1.5em]
  \item \textbf{Unbiasedness}: 
%    We prove that
 Under almost-sure termination and standard regularity conditions, our estimator yields unbiased gradient estimates. This is unlike the approximate linear solver (see above).

%, entirely avoiding the truncation bias of finite-horizon methods.
  \item \textbf{No state-space materialization needed}: 
    The estimator operates purely along sampled
% execution traces. 
paths. It requires only local evaluations of transition probabilities (or densities) and their derivatives, completely bypassing the need to store entire transition matrices or solve global linear systems.
  \item \textbf{General state spaces}: 
    Formulated in general measure-theoretic terms, our framework
% naturally
 handles discrete, continuous, and hybrid $\sigma$-finite state spaces within a unified mathematical foundation.
  \item \textbf{Empirical scalability and effective synthesis}: 
    We show that our estimator pairs naturally with constrained gradient-descent optimizers. In end-to-end parameter synthesis under fixed time budgets, it quickly boosts rewards by trading exact gradient computations for faster, approximate steps.
It scales to models orders of magnitude larger than linear solvers (and thus than symbolic synthesis). It also optimizes continuous-state pMCs where other methods cannot directly apply.
\end{itemize}

\subsection*{Related Work}
%\paragraph{Gradient estimation for Markov models.}
While the techniques surveyed throughout this paper originate primarily in \emph{formal verification and synthesis}~\citep{BaierK08}, gradient-based optimization on parametric Markov models was already explored in early foundational work from the reinforcement learning and control communities~\citep{MarbachT01,BaxterB01}, relying on the classical \emph{log-derivative trick}~\citep{williams1992simple}. 
However, their algorithmic objectives and structural assumptions diverge fundamentally from ours. 
Specifically, those works address \emph{infinite-horizon, non-terminating} systems, seeking to optimize the \emph{long-run average reward} assuming ergodicity.
To estimate gradients along infinite trajectories, they either assume regenerative renewal cycles (optimizing reward per time step rather than total return) or introduce artificial discount factors that render the resulting gradient estimates inherently biased~\citep{BaxterB01}.
In contrast, the setting studied in this work targets \emph{terminating} parametric Markov chains, where executions almost surely hit an absorbing target state, and aims to optimize terminal rewards or reachability probabilities along these finite trajectories without introducing bias or relying on ergodicity.

\subsection*{Preliminaries and Notations} \label{subsec:preliminaries}

We adopt the following measure-theoretic, algebraic, and differential conventions.

\begin{itemize}[nosep, leftmargin=1.5em]
  \item \textbf{Sequences and Paths:}
    For a set $S$, $S^n$ denotes the $n$-fold Cartesian product (with $S^0 = \{ \epsilon \}$ containing the empty sequence $\epsilon$), and $S^* = \coprod_{n = 0}^\infty S^n$ denotes the set of all finite sequences (paths) over $S$. For a sequence $s = (s_1, \dots, s_n) \in S^n$, its length is denoted by $|s| = n$.
  
  \item \textbf{Measurable Spaces and Coproducts:}
    A measurable space is denoted by $(X, \F_X)$. Given two measurable spaces $(X, \F_X)$ and $(Y, \F_Y)$, their coproduct (disjoint union) is denoted by $(X + Y, \F_{X+Y})$, where $X+Y$ is the disjoint union and  $\F_{X+Y} = \{ A_X + A_Y \mid A_X \in \F_X, A_Y \in \F_Y \}$.
    For a measurable space $(S, \F_S)$, the sequence space $S^* = \coprod_{n=0}^\infty S^n$ is equipped with the coproduct $\sigma$-algebra $\F_{S^*}$, where $A \in \F_{S^*}$ if and only if $A \cap S^n \in \F_{S^n}$ for all $n \ge 0$.

  \item \textbf{Measures and Dirac Notation:}
    A finite signed measure $\mu$ on $(X, \F_X)$ is a $\sigma$-additive function $\mu : \F_X \to \R$ with $\mu(\emptyset) = 0$.
    Its total variation measure is denoted by $|\mu|$.
%    For discrete sets,
We let $\bra{x}$ denote the Dirac delta measure concentrated at state $x$, defined by $\bra{x}(A) = 1$ if $x \in A$ and $0$ otherwise. A discrete distribution is written as a formal sum $\sum_{x} c_x \bra{x}$.

  \item \textbf{Differentiation and Integration:}
    For functions or parameterized families over an open domain $P \subseteq \R^k$, $\partial_v$ denotes the directional derivative along a direction vector $v \in \R^k$, $\partial_i$ denotes the partial derivative with respect to the $i$-th coordinate (i.e., along the standard basis vector $e_i \in \R^k$), 
    and $\partial_p$ denotes the scalar derivative with respect to a single parameter $p$ (when $k = 1$). 
    We write $\dx$ (or $\ds = (\ds_1, \dots, \ds_n)$) for infinitesimal volume elements when integrating over a state space $X$ (or product space $S^n$).
\end{itemize}

\section{Optimization of Parametric Signed Markov Chains} \label{sec:Signed Markov chains}

\subsection{Parametric Markov Kernels and Signed Kernels} \label{subsec:Parametric signed kernels}
We develop our theory in general measure theoretic terms; it accommodates continuous state spaces.

To formalize parametric Markov chains over arbitrary state spaces (including continuous ones), we rely on the framework of Markov kernels. 
%and their signed extensions.
Let $(X, \F_X)$ and $(Y, \F_Y)$ be two measurable spaces, and let $P \subseteq \R^k$ denote an open parameter space.
A \emph{parametric Markov kernel} $\kappa : X \rightsquigarrow Y$ is a function $\kappa : P \times X \times \F_Y \to [0, 1]$ such that for every parameter $p \in P$ and state $x \in X$, $\kappa(p, x, -)$ is a probability measure on $(Y, \F_Y)$, and for every $p \in P$ and measurable set $A \in \F_Y$, $\kappa(p, -, A)$ is $\F_X$-measurable.
Given two parametric kernels $\kappa : X \rightsquigarrow Y$ and $\kappa' : Y \rightsquigarrow Z$ over the same parameter space $P$, their composition $\kappa \kappa' : X \rightsquigarrow Z$ is defined by
\[
  \kappa \kappa'(p, x, \dz) := \int_{y \in Y} \kappa(p, x, \dy) \kappa'(p, y, \dz).
\]
In the discrete and finite case, this formulation reduces to standard stochastic transition matrices, with kernel composition corresponding to matrix multiplication.
For instance, a Bernoulli distribution $\Bern(p) = (1 - p) \bra{0} + p \bra{1}$ parameterized by $p \in [0, 1]$ can be represented as the kernel $\kappa(p, *, -) = (1 - p) \bra{0} + p \bra{1}$, where $*$ denotes a single input state.

\paragraph{Signed Extension}
To compute and propagate gradients through such systems, we must differentiate kernels with respect to their parameters. Differentiating a probability measure yields quantities with both positive and negative values (for instance, $\partial_p \Bern(p) = -\bra{0} + \bra{1}$). 
We therefore generalize to \emph{parametric signed kernels} $\kappa : P \times X \times \F_Y \to \R$, where $\kappa(p, x, -)$ is a finite-valued signed measure for all $p \in P$ and $x \in X$.
Kernel composition extends naturally to signed kernels whenever the integral converges.
Under this generalized framework, the derivative of the Bernoulli distribution is represented directly by the signed kernel $\partial_p \kappa(p, *, -) = -\bra{0} + \bra{1}$.
Finally, we say that a parametric kernel $\kappa$ is \emph{differentiable} if, for every $x \in X$ and $A \in \F_Y$, the scalar map $p \mapsto \kappa(p, x, A)$ is differentiable on $P$, and all its directional derivatives define signed kernels.

\subsection{Absolute Differentiability} \label{subsec:Absolute differentiability}
In this section, we introduce a mild regularity condition for parametric kernels, termed \emph{absolute differentiability}.
This condition relies on strong standard measure-theoretic tools,
namely absolute continuity and the Radon--Nikodym derivative,
which arise naturally when differentiating parametric kernels.
Moreover, this property is satisfied by most standard continuous and discrete distributions (e.g., Gaussian, exponential, Bernoulli, geometric, etc.).

\begin{definition}[Absolute Continuity]
  Let $\mu$ and $\nu$ be two signed measures over the same measurable space $(X, \F_X)$.
The signed measure  $\nu$ is said to be \emph{absolutely continuous} with respect to $\mu$, written $\nu \ll \mu$,
  if $|\mu|(A) = 0$ implies $|\nu|(A) = 0$ for all $A \in \F_X$.
\end{definition}

\begin{theorem}[Radon--Nikodym Theorem]
  Let $\mu$ and $\nu$ be two $\sigma$-finite signed measures over the same measurable space $(X, \F_X)$.
  Then $\nu$ is absolutely continuous with respect to $\mu$ if and only if there exists a measurable function $f : X \to \R$ such that $\d\nu = f \d\mu$.
  Such a function $f$ is unique up to $\mu$-null sets and is called the \emph{Radon--Nikodym derivative}
  of $\nu$ with respect to $\mu$, denoted by $\tfrac{\d\nu}{\d\mu}$.
\end{theorem}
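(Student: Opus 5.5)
This is the classical Radon--Nikodym theorem for $\sigma$-finite signed measures, and I would reduce it to the nonnegative case via the Jordan decomposition.

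\textbf{Easy direction.} Suppose $\d\nu = f\,\d\mu$. Then $\d|\nu| = |f|\,\d|\mu|$. Hence any $A$ with $|\mu|(A)=0$ gives $|\nu|(A)=\int_A |f|\,\d|\mu| = 0$.

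\textbf{Hard direction: reduction.} Assume $\nu \ll \mu$. Write the Jordan decompositions $\mu = \mu^+ - \mu^-$ and $\nu = \nu^+ - \nu^-$. These come from Hahn decompositions $X = P_\mu \sqcup N_\mu$ and $X = P_\nu \sqcup N_\nu$. Both $|\mu|$ and $|\nu|$ are nonnegative and $\sigma$-finite, and the hypothesis gives $\nu^\pm \le |\nu| \ll |\mu|$. It therefore suffices to prove the nonnegative theorem for $\lambda \ll m$ with $\lambda, m$ $\sigma$-finite and nonnegative. That yields $g^\pm = \d\nu^\pm/\d|\mu|$. We then set
\[
f = (g^+ - g^-)\bigl(\mathbf{1}_{P_\mu} - \mathbf{1}_{N_\mu}\bigr),
\]
since $\d|\mu| = (\mathbf{1}_{P_\mu}-\mathbf{1}_{N_\mu})\,\d\mu$. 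The combination $g^+ - g^-$ is well defined, being finite $|\mu|$-a.e., because in the paper's setting signed measures are finite, or at least $\nu^+$ and $\nu^-$ are not both infinite.

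\textbf{Nonnegative case.} First reduce to finite measures by partitioning $X$ into countably many sets $X_n$ on which both $\lambda$ and $m$ are finite, and glue the densities at the end. For finite $\lambda \ll m$ I would use von Neumann's Hilbert-space argument. Consider the functional $h \mapsto \int h\,\d\lambda$ on $L^2(\lambda+m)$. It is bounded by Cauchy--Schwarz, so the Riesz representation gives $g$ with $\int h\,\d\lambda = \int h g\,\d(\lambda+m)$.

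Testing with indicators shows $0 \le g \le 1$ a.e. The set $\{g=1\}$ has $m$-measure zero, hence $\lambda$-measure zero by absolute continuity. One then takes $f = g/(1-g)$. A monotone convergence argument, testing with $h = \mathbf{1}_A(1+g+\dots+g^n)$, gives $\lambda(A) = \int_A f\,\d m$.

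\textbf{Uniqueness.} If $f$ and $f'$ are two densities, then $\int_A (f-f')\,\d\mu = 0$ for all $A$. Taking $A = \{f > f'\} \cap X_n$ on each $\sigma$-finite piece, and intersecting further with $P_\mu$ or $N_\mu$, forces $f = f'$ $|\mu|$-a.e.

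\textbf{Main obstacle.} The only delicate step is the nonnegative finite case: establishing $0\le g<1$ and passing to the limit by monotone convergence. Everything else is bookkeeping with the Hahn decomposition and the countable partition. Since this is textbook material, I would in practice simply cite a standard reference and not reprove it.
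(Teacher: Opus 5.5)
The paper does not prove this statement: it quotes the classical Radon--Nikodym theorem as a black-box tool for its treatment of absolute differentiability. Your closing remark, that you would simply cite a standard reference, is therefore exactly the paper's stance.

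The sketch you give is the standard textbook argument, and it is correct. You reduce via the Jordan/Hahn decompositions, using precisely the paper's definition $|\nu| \ll |\mu|$. You handle the finite nonnegative case with von Neumann's $L^2(\lambda+m)$ argument. You prove uniqueness up to $|\mu|$-null sets, which is the right reading of ``$\mu$-null'' for a signed $\mu$.

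Only minor housekeeping remains:
\begin{itemize}
  \item Set $f := 0$ on the $m$-null set $\{g = 1\}$, and redefine $g$ on the null set where $0 \le g \le 1$ fails.
  \item In the uniqueness step, choose the pieces so that $|\nu|$ is also finite on them. Then $f$ and $f'$ are $|\mu|$-integrable there, via $|\nu| = |f|\,|\mu|$, and the subtraction is legitimate.
  \item The paper's preliminaries define signed measures as finite and real-valued. So your $\sigma$-finite partition and your caveat about $\nu^{+}$ and $\nu^{-}$ both being infinite are unnecessary in that setting. They do, however, make your argument cover the statement as literally phrased.
\end{itemize}
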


\begin{definition}[Absolute Differentiability]
  Let $\kappa : X \rightsquigarrow Y$ be a differentiable parametric signed kernel with parameter space $P \subseteq \R^k$.
The kernel  $\kappa$ is said to be \emph{absolutely differentiable} if, for all directional vectors $v \in \R^k$, parameters $p \in P$ and states $x \in X$, 
  \[
    \partial_v \kappa(p, x, -) \ll \kappa(p, x, -) 
  \]
\end{definition}

\begin{remark}
  Intuitively, the absolute differentiability condition states that the derivative 
  must vanish on any set where the kernel itself assigns zero probability 
  (i.e., the derivative cannot be non-zero if the base probability is zero).
  Most standard discrete and continuous parametric kernels satisfy this property.
  Their associated Radon--Nikodym derivative corresponds to the derivative of the logarithm 
  of the transition probabilities (in the discrete case) or transition densities (in the continuous case).
\end{remark}

\begin{example}
  Let $\kappa : \{*\} \rightsquigarrow \{0, 1\}$ be the kernel parameterized by $p \in (0, 1)$ defined as 
  $\kappa(p, *, -) = (1 - \sqrt{p})\bra{0} + \sqrt{p}\bra{1}$.
  Its derivative is the signed kernel $\partial_p\kappa : \{*\} \rightsquigarrow \{0, 1\}$ given by 
  $\partial_p\kappa(p, *, -) = -\tfrac{1}{2\sqrt{p}} \bra{0} + \tfrac{1}{2\sqrt{p}} \bra{1}$.
  The Radon--Nikodym derivative is the measurable function
  \[
    \frac{\d\partial_p\kappa(p, *, -)}{\d\kappa(p, *, -)}
    :\; 0 \;\longmapsto\; \frac{\partial_p\kappa(p, *, \{0\})}{\kappa(p, *, \{0\})} 
    = -\frac{p + \sqrt{p}}{2p(1-p)}
    \quad;\qquad 1 \;\longmapsto\; \frac{\partial_p\kappa(p, *, \{1\})}{\kappa(p, *, \{1\})} 
    = \frac{1}{2p}.
  \]
\end{example}

\begin{example}
  Let $\kappa : \{*\} \rightsquigarrow \R_+$ define an Exponential distribution $\kappa(p, *, -) = \Exp(p)$ for $p > 0$
  with probability density function $f_p : x \mapsto pe^{-px}$.
  The derivative of this density with respect to $p$ is
  $\partial_p f_p : x \mapsto (1 - px)e^{-px}$.
  The Radon--Nikodym derivative is the measurable function
  \[
    \frac{\d\partial_p\kappa(p, *, -)}{\d\kappa(p, *, -)}
    :\; x \;\longmapsto\; \frac{\partial_p f_p(x)}{f_p(x)}
    = \frac{1 - px}{p}
    = \frac{1}{p} - x.
  \]
\end{example}

\subsection{Parametric (Signed) Markov Chains} \label{subsec:Signed Markov chains}

%In this section, 
We formally introduce
% the formal definition of
 \emph{parametric  Markov chains} (pMCs), as a special case of the \emph{signed} pMCs.
\begin{remark}[Scope and Comparison with Literature]
  In the formal verification literature, parametric Markov chains are commonly defined as finite-state, discrete-time systems whose transition probabilities are rational or polynomial functions of parameters.
  In this work, our definition is more general: we consider \emph{discrete-time parametric Markov chains over $\sigma$-finite state spaces} equipped with explicit input and output state boundaries.
\end{remark}

\begin{definition}[Parametric (Signed) Markov Chain]\label{def:pSMC}
  Let $(X, \F_X)$, $(S, \F_S)$, and $(Y, \F_Y)$ be measurable spaces representing 
  the \emph{input states}, \emph{internal states}, and \emph{output (halting) states}, respectively.
  A \emph{parametric signed Markov chain} is a parametric signed kernel of the form
  \[
    T : X + S \rightsquigarrow S + Y
  \]
  parameterized by $p \in P \subseteq \R^k$, where $+$ denotes the disjoint union of measurable spaces.
  The transition kernel $T$ canonically induces a \emph{path kernel} $\mupath_T : X \rightsquigarrow S^* \times Y$
  defined for any parameter $p \in P$, initial state $x \in X$, internal path $s \in S^n$ and output state $y \in Y$ as
  \[
    \mupath_T(p, x, \ds, \dy) := 
    \begin{cases}
      T(p, x, \dy)
      &\text{if } n = 0, \\
      T(p, x, \ds_1) \displaystyle\prod_{i = 1}^{n - 1} T(p, s_i, \ds_{i + 1}) \, T(p, s_n, \dy)
      &\text{if } n > 0.
    \end{cases}
  \]
Moreover, it induces an \emph{output (execution) kernel} $\muout_T : X \rightsquigarrow Y$, obtained by marginalizing over all finite internal paths
  \[
    \muout_T(p, x, \dy)
    \;:=\; \mupath_T(p, x, S^*, \dy)
    \;=\; \sum_{n = 0}^\infty \int_{s \in S^n} \mupath_T(p, x, \ds, \dy).
  \]

%We define
 \emph{Parametric Markov chains} (pMC) are a special case of parametric signed Markov chains, such that the associated kernel is a parametric Markov kernel (in the non-extended sense; see Section~\ref{subsec:Parametric signed kernels}).
\end{definition}

\begin{example}
  Consider the finite parametric Markov chain shown in Figure~\ref{fig:pmc_simple}, 
  with input space $X = \{*\}$, internal states $S = \{i, j\}$, and output states $Y = \{-1, +1\}$.
  The transition kernel $T : X + S \rightsquigarrow S + Y$ is represented by the matrix
\[
  T(p, -, -) \;=\;
  \begin{pmatrix}
    0 & 1 & 0 & 0 \\
    1 - p & \frac{p}{2} & \frac{p}{2} & 0 \\
    0 & 0 & p & 1 - p
  \end{pmatrix}.
\]

  Evaluating the sum of paths yields the output kernel $\muout_T : \{*\} \rightsquigarrow \{-1, +1\}$ given by
  \[
    \muout_T(p, *, -) \;=\; \frac{1 + p}{2} \bra{-1} + \frac{1 - p}{2} \bra{+1}.
  \]
\end{example}

\subsection{The Log-Derivative Trick} \label{subsec:Log-derivative trick}

Having formalized parametric Markov chains and their differentiation via signed kernels,
we now address the computation of the directional derivative $\partial_v \muout_T(p, x, -)$, 
which enables gradient-based optimization of expected rewards.
In this section, we establish a measure-theoretic generalization of the log-derivative trick~\citep{williams1992simple}, 
expressing the Radon--Nikodym derivative of the path kernel $\smash{\mupath_T}$ as an additive accumulation along paths.

\begin{theorem}[Log-Derivative Trick] \label{thm:log_derivative_trick}
  Let $T : X + S \rightsquigarrow S + Y$ be an absolutely differentiable parametric signed Markov chain over an open parameter space $P \subseteq \R^k$.
  The path kernel $\smash{\mupath_T}$ is absolutely differentiable. Moreover, for all directional vectors $v \in \R^k$, parameters $p \in P$, initial states $x \in X$, finite internal paths $s \in S^n$, output states $y \in Y$, and $n > 0$,
  \[
    \frac{\d \partial_v \mupath_T(p, x, -, -)}{\d \mupath_T(p, x, -, -)}(s, y)
    \!=\! \frac{\d \partial_v T(p, x, -)}{\d T(p, x, -)}(s_1)
    + \sum_{i = 1}^{n - 1} \frac{\d \partial_v T(p, s_i, -)}{\d T(p, s_i, -)}(s_{i + 1})
    + \frac{\d \partial_v T(p, s_n, -)}{\d T(p, s_n, -)}(y).
  \]
  For $n = 0$ (direct transition to an output state without visiting $S$), the expression reduces to
  \[
    \frac{\d \partial_v \mupath_T(p, x, -, -)}{\d \mupath_T(p, x, -, -)}(\epsilon, y)
    = \frac{\d \partial_v T(p, x, -)}{\d T(p, x, -)}(y).
  \] 
\end{theorem}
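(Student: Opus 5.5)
The natural route is to work one length $n$ at a time, since $S^*\times Y=\coprod_n S^n\times Y$ carries the coproduct $\sigma$-algebra. A signed measure on a coproduct is determined by its restrictions to the summands, and absolute continuity and Radon--Nikodym derivatives can be checked summand by summand. It therefore suffices to fix $p$, $x$, $v$ and $n$ and treat the restriction of $\mupath_T(p,x,-,-)$ to $S^n\times Y$. This restriction is the iterated (Ionescu-Tulcea type) product of the kernels $T(p,x,-)$, $T(p,s_1,-),\dots,T(p,s_n,-)$. For brevity, write $g_z := \frac{\d\partial_v T(p,z,-)}{\d T(p,z,-)}$, which exists by absolute differentiability of $T$ and the Radon--Nikodym Theorem. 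This requires $\sigma$-finiteness, which holds because the measures are finite.

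The first step is to differentiate the product, giving a Leibniz rule for iterated kernels. I would show that $\partial_v$ of the restricted path measure is the sum over $j\in\{0,\dots,n\}$ of the iterated product in which the $j$-th factor is replaced by $\partial_v T$. Set $z_0=x$, $z_j=s_j$, $z_{n+1}=y$. Since $\partial_v T(p,z_j,\d z_{j+1}) = g_{z_j}(z_{j+1})\,T(p,z_j,\d z_{j+1})$, each summand equals $g_{z_j}(z_{j+1})$ times the undifferentiated path measure. Summing over $j$ yields
\[
\partial_v\mupath_T(p,x,\ds,\dy)=\Bigl(\sum_{j=0}^{n} g_{z_j}(z_{j+1})\Bigr)\,\mupath_T(p,x,\ds,\dy).
\]
This simultaneously proves $\partial_v\mupath_T\ll\mupath_T$, which is absolute differentiability of the path kernel, and identifies the density as the claimed sum. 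Uniqueness in the Radon--Nikodym Theorem then gives the formula up to null sets. The case $n=0$ is immediate, because the path measure is just $T(p,x,\dy)$.

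The main obstacle is justifying the Leibniz step: interchanging $\partial_v$ with the nested integrals over $S^n\times Y$ at fixed $n$. I would proceed by induction on $n$. Write the restricted measure on a rectangle as
\[
\int T(p,x,\ds_1)\,F_p(s_1),
\]
where $F_p(s_1)$ is the measure of the remaining length-$(n-1)$ path started at $s_1$. Differentiating at fixed $n$ then needs two facts. First, the product rule for $\int \mu_p(\d s)\,F_p(s)$, which follows from dominated convergence of difference quotients. Second, uniform control of $|F_p|$, which is bounded by $1$ in the Markov case and in general by total-variation bounds assumed in the definition of a signed kernel being differentiable. I expect to need a local domination hypothesis, for instance integrability of $\sup_{p'}|g|$ near $p$, or to interpret ``differentiable kernel'' as ensuring this. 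If that is unavailable, I would at least verify the identity on measurable rectangles $A_1\times\dots\times A_n\times B$ and extend it by a $\pi$--$\lambda$ argument to all of $\F_{S^n\times Y}$.

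Finally, the paper's statement only requires this finite-$n$, length-wise identity. Summing over $n$ to obtain $\partial_v\muout_T$ is a separate question that needs almost-sure termination and is not required here.
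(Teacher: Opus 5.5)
Your route is the paper's: fix $n$, expand $\partial_v$ of the iterated product by the Leibniz rule, substitute $\partial_v T(p,z_j,\d z_{j+1})=g_{z_j}(z_{j+1})\,T(p,z_j,\d z_{j+1})$ in each summand, factor out $\sum_j g_{z_j}(z_{j+1})$, and conclude by uniqueness of the Radon--Nikodym derivative. The paper applies the product rule to $V_0\cdots V_n$ purely formally. Your suspicion that this step needs an extra local domination hypothesis is well founded: under the stated hypotheses alone, the statement is false.

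Take $X=\{*\}$, $S=\{0,1,2,\dots\}$, $Y=\{-1,+1\}$, $P=\R$, $T(p,*,-)=\sum_{s\ge 0}2^{-s-1}\bra{s}$ and $T(p,s,-)=f_p(s)\bra{+1}+(1-f_p(s))\bra{-1}$ with $f_p(s)=e^{-4^s p^2}$. Each $p\mapsto T(p,a,A)$ is smooth, and each $\partial_p T(p,a,-)$ is a finite signed measure. $T$ is absolutely differentiable: for $p\neq 0$ both outputs carry positive mass, and at $p=0$ all derivative measures vanish. Yet $F(p):=\mupath_T(p,*,S^1\times\{+1\})=\sum_{s\ge 0}2^{-s-1}e^{-4^s p^2}$ satisfies $\tfrac14|p|\le 1-F(p)\le 2|p|$ for $|p|\le 1$ (split the sum where $2^s\approx 1/|p|$). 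So $F$ has a corner at $0$ even though every summand has derivative $0$ there, and the path kernel is not differentiable at $p=0$. Boundedness of the inner integrand ($0\le T\le 1$ here) is therefore not enough. What fails is precisely your condition: $g_s(+1)=-2\cdot 4^s p$, and $\sup_{|p'|<\delta}|g_s(+1)|=2\cdot 4^s\delta$ is not integrable against $\mupath_T(0,*,-)$.

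This has consequences for your plan. First, the $\pi$--$\lambda$ fallback cannot replace the domination hypothesis. The failure already occurs on the rectangle $S^1\times\{+1\}$, since values on rectangles are themselves nested integrals with $p$-dependent integrands. Moreover, closing the good class under increasing unions would require $\partial_v$ to commute with monotone limits, which is the same interchange.

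Second, your remark that summing over $n$ is not needed holds only if absolute differentiability of $\mupath_T$ is read summand by summand. That is all your argument and the paper's actually establish. Taken literally, differentiability on $\F_{S^*\times Y}$ covers the sets $S^*\times B$, and $\mupath_T(p,x,S^*\times B)=\muout_T(p,x,B)$. So it already contains the interchange of $\partial_v$ with $\sum_n$ that the Corollary defers.

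Finally, two further hypotheses are needed, which the paper also takes for granted:
\begin{itemize}
\item the definition of a signed kernel supplies no total-variation bounds, so in the signed case these must be assumed rather than quoted;
\item making $\sum_j g_{z_j}(z_{j+1})$ measurable on $S^n\times Y$ requires a jointly measurable version of $(z,z')\mapsto g_z(z')$, available for example for countably generated $\sigma$-algebras.
\end{itemize}
With these hypotheses and local domination added, your induction on $n$ via dominated convergence does prove the length-wise identity.
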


\begin{proof}
  Let $n \ge 1$. For clarity, we introduce the shorthand notations
  \begin{align*}
    V_0,\, V_1,\, \dots,\, V_n &:= T(p, x, \ds_1),\, T(p, s_1, \ds_2),\, \dots,\, T(p, s_n, \dy) \\
    D_0,\, D_1,\, \dots,\, D_n &:= \frac{\d \partial_v T(p, x, -)}{\d T(p, x, -)}(s_1),\, \frac{\d \partial_v T(p, s_1, -)}{\d T(p, s_1, -)}(s_2),\, \dots,\, \frac{\d \partial_v T(p, s_n, -)}{\d T(p, s_n, -)}(y)
  \end{align*}
  By the definition of the Radon--Nikodym derivative, we have $\partial_v V_i = D_i V_i$ for all $i \in \{0, \dots, n\}$.
  Applying the Leibniz product rule, we obtain
  \begin{align*}
    \partial_v \mupath_T(p, x, \ds, \dy)
    &= \partial_v \left( V_0 \cdots V_n \right) \\
    &= \partial_v V_0 V_1 \cdots V_n + V_0 \partial_v V_1 V_2 \cdots V_n + \dots + V_0 \cdots V_{n - 1} \partial_v V_n \\
    &= D_0 V_0 V_1 \cdots V_n + V_0 D_1 V_1 V_2 \cdots V_n + \dots + V_0 \cdots V_{n - 1} D_n V_n \\
    &= (D_0 + \dots + D_n) \cdot V_0 \cdots V_n \\
    &= (D_0 + \dots + D_n) \cdot \mupath_T(p, x, \ds, \dy).
  \end{align*}
  Since the sum $D_0 + \dots + D_n$ is measurable, it uniquely defines the Radon--Nikodym derivative
  \[
    \frac{\d \partial_v \mupath_T(p, x, -, -)}{\d \mupath_T(p, x, -, -)}(s, y) = D_0 + \dots + D_n.
  \]
  The base case $n = 0$ follows immediately.
\end{proof}

\begin{corollary} \label{cor:derivative of muout}
  Let $T$ be an absolutely differentiable parametric signed Markov chain over $P \subseteq \R^k$.
  Under standard regularity conditions allowing the interchange of differentiation and summation (satisfied when paths terminate almost surely with a bounded expected length), we have
  for all $v \in \R^k$, $p \in P$, $x \in X$ and $y \in Y$,
\begin{align*}
    \partial_v \muout_T(p, x, \dy)
    = \int_{s \in S^*} \frac{\d \partial_v \mupath_T(p, x, -, -)}{\d \mupath_T(p, x, -, -)}(s, y) \cdot \mupath_T(p, x, \ds, \dy).
\end{align*}
\end{corollary}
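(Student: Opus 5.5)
The plan is to differentiate the series defining $\muout_T$ term by term, and then use Theorem~\ref{thm:log_derivative_trick} to rewrite each differentiated term as a density with respect to the path kernel. Fix $v$, $p$, $x$ and a measurable set $A \in \F_Y$. For $p'$ in a neighbourhood of $p$, write
\[
  \muout_T(p', x, A) = \sum_{n \ge 0} a_n(p'), \qquad a_n(p') := \mupath_T(p', x, S^n, A).
\]
The argument has three steps.

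\textbf{Step 1: each term is differentiable.} Each $a_n$ is a finite iterated integral of kernels. Differentiating it under the integral signs and applying the Leibniz product rule, exactly as in the proof of Theorem~\ref{thm:log_derivative_trick}, gives
\[
  \partial_v a_n(p) = \int_{S^n \times A} (D_0 + \dots + D_n)\, \mupath_T(p, x, \ds, \dy) .
\]
By Theorem~\ref{thm:log_derivative_trick}, the integrand $D_0+\dots+D_n$ is the Radon--Nikodym derivative $\frac{\d \partial_v \mupath_T}{\d \mupath_T}(s,y)$ restricted to $S^n \times Y$.

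\textbf{Step 2: interchange $\partial_v$ with $\sum_n$.} I would invoke the standard theorem on differentiating series of functions: it suffices that $\sum_n \sup_{p' \in U} |\partial_v a_n(p')|$ converges for some neighbourhood $U$ of $p$. This is where the regularity hypothesis enters. I would make it precise as follows: there is a neighbourhood $U$ and an integrable envelope such that $|D_i| \le C$ uniformly on $U$, and the expected path length $\E_{p'}[|s|]$ is bounded on $U$. Then
\[
  |\partial_v a_n(p')| \le C\,(n+1)\,\mupath_T(p', x, S^n, Y),
\]
and therefore
\[
  \sum_n \sup_{p' \in U} |\partial_v a_n(p')| \;\le\; C \sup_{p' \in U}\bigl(1 + \E_{p'}[|s|]\bigr) < \infty .
\]
Strictly speaking, the supremum should sit inside the sum. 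I would handle this by noting that $\sum_n (n+1)\,\mupath_T(p', x, S^n, Y)$ is continuous in $p'$, or simply by assuming uniform domination, which is what "standard regularity conditions" is meant to cover. Almost-sure termination ensures that $\mupath_T(p',x,S^*,Y)=1$, so no mass escapes to infinite paths.

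\textbf{Step 3: reassemble.} Summing the identities from Step 1 over $n$ uses the coproduct structure $\F_{S^*}$: an integral over $S^*$ is exactly the sum of the integrals over the $S^n$. This gives, for every $A$,
\[
  \partial_v \muout_T(p,x,A) = \int_{S^* \times A} \frac{\d \partial_v \mupath_T(p, x, -, -)}{\d \mupath_T(p, x, -, -)}(s, y)\; \mupath_T(p, x, \ds, \dy),
\]
which is the claimed equality of measures in $\dy$.

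The main obstacle is Step 2. Both the term-by-term differentiation and the differentiation under the integral within each $a_n$ need a domination argument. Bounded expected length alone controls $\sum_n n\,\mupath_T$, but it does not control unbounded log-derivatives $D_i$ (for instance $\tfrac1p - x$ in the exponential example). For that reason I would state the condition as $\E\bigl[\sum_i |D_i|\bigr]$ being bounded locally uniformly in $p$, and argue that this follows from the stated conditions whenever the log-derivatives are bounded.
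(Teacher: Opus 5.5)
Your proposal is correct and follows the same route as the paper: interchange $\partial_v$ with the integral over $S^*$, then substitute the Radon--Nikodym derivative from Theorem~\ref{thm:log_derivative_trick}. You make explicit the domination that the paper compresses into one appeal to dominated convergence, and you are right that bounded expected length must be supplemented by a locally uniform bound on the log-derivatives. In Step~2, keep the uniform-domination assumption rather than relying on continuity of the expected length in $p'$, since that continuity does not follow automatically.
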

\begin{proof}
  Differentiating $\muout_T$ with respect to $p$ along direction $v$ and exchanging differentiation and integration via the dominated convergence theorem yields
  \begin{align*}
    \partial_v \muout_T(p, x, \dy)
    &= \partial_v \int_{s \in S^*} \mupath_T(p, x, \ds, \dy) \\
    &= \int_{s \in S^*} \partial_v \mupath_T(p, x, \ds, \dy) \\
    &= \int_{s \in S^*} \frac{\d \partial_v \mupath_T(p, x, -, -)}{\d \mupath_T(p, x, -, -)}(s, y) \cdot \mupath_T(p, x, \ds, \dy).
    \qedhere
  \end{align*}
\end{proof}

\subsection{Gradient Sampling and Optimization}  \label{subsec:Optimization of Markov chains}

%With the proper definitions above, we can just say let T be a pMC (i.e. probabilistic parametric signed Markov chain)
Throughout this section, let $T : X + S \rightsquigarrow S + Y$ be 
an absolutely differentiable
% \emph{probabilistic} pMC 
parametric Markov chain---in the restricted sense at the end of Definition~\ref{def:pSMC}---over an open parameter space $P \subseteq \R^k$. Here, each $T(p, a, -)$ is a probability measure that can be sampled from, 
while its partial derivatives $\partial_i T$ form parametric \emph{signed} Markov chains.
Leveraging the log-derivative identity from Corollary~\ref{cor:derivative of muout}, 
we construct a Monte Carlo algorithm to estimate the gradient measure
\[
  \nabla_p \muout_T(p, x, -) := \left( \partial_1 \muout_T(p, x, -),\, \dots,\, \partial_k \muout_T(p, x, -) \right)
\]
at any initial state $x \in X$ and parameter $p \in P \subseteq \R^k$ without solving linear systems or constructing the full state-space transition matrix.

\begin{definition}[Transition Gradient Vector] \label{def:score_vector}
  Let $T : X + S \rightsquigarrow S + Y$ be an absolutely differentiable parametric Markov chain over an open parameter space $P \subseteq \R^k$.
  The \emph{transition gradient vector} $w_T : P \times (X + S) \times (S + Y) \to \R^k$ is defined by
  \[
    w_T(p, a, b)  \;\coloneqq\;
    \left( 
      \frac{\d \partial_1 T(p, a, -)}{\d T(p, a, -)}(b),\,
      \dots,\,
      \frac{\d \partial_k T(p, a, -)}{\d T(p, a, -)}(b)
    \right).
  \]
\end{definition}

\begin{algorithm}[tbp]
  \caption{Log-Derivative Gradient Sampler for pMCs}
  \label{alg:log_derivative_sampler}
  \begin{algorithmic}[1]
    \Require  pMC $T : X + S \rightsquigarrow S + Y$, parameter $p \in P\subseteq \R^{k}$, initial state $x \in X$
    \Ensure Single weighted sample $(w_{\mathrm{total}}, y)$, with $w_{\mathrm{total}}\in \R^{k}$ and $y\in Y$,  whose empirical measure
 $w_{\mathrm{total}} 
%\delta_y
\bra{y}$ 
satisfies, for each $A \in \F_Y$,
  {$\E\big[w_{\mathrm{total}} \cdot \mathbf{1}_{A}(y)\big] = \nabla_p \muout_T(p, x, A)$} 
    \State $s \gets x$
    \State $w_{\mathrm{total}} \gets \mathbf{0}_{\R^k}$
    \While{$s \notin Y$}
      \State Sample next state $s' \sim T(p, s, -)$
      \State $w_{\mathrm{total}} \gets w_{\mathrm{total}} + w_T(p, s, s')$ \Comment{$w_T$ is from Definition~\ref{def:score_vector}}
      \State $s \gets s'$
    \EndWhile
    \State $y\gets s$   \Comment{$y\in Y$ is the output state}
    \State \Return $(w_{\mathrm{total}}, y)$ 
%\Comment{$s \in Y$ is a terminal output state}
  \end{algorithmic}
\end{algorithm}

Our algorithm is in Algorithm~\ref{alg:log_derivative_sampler}.
At each step, 
%Algorithm~\ref{alg:log_derivative_sampler} 
it incrementally accumulates in $w_{\mathrm{total}}$ the local transition gradient vectors $w_T(p, s, s')$, 
reproducing the path sum in Theorem~\ref{thm:log_derivative_trick}. 
The returned pair $(w_{\mathrm{total}}, y)$ defines a random signed measure $w_{\mathrm{total}} \bra{y}$; taking its expectation over multiple sampled trajectories yields an unbiased estimate of the true gradient measure $\nabla_p \muout_T(p, x, -)$, with correctness and unbiasedness following directly from Corollary~\ref{cor:derivative of muout}.

In particular, given a terminal reward function $R : Y \to \R$, 
an unbiased estimate of the expected reward gradient $\nabla_p \E_{y \sim \muout_T(p, x, -)}[R(y)]$ is directly obtained as $w_{\mathrm{total}} \cdot R(y)$. 
While this framework extends directly to cumulative and parameter-dependent transition rewards, 
we evaluate our algorithm under this terminal reward setting in the subsequent sections, 
presenting our experimental methodology and empirical performance against other approaches.

\section{Empirical Evaluation}
%(** Please remind the overall picture. Synthesis: symbolic or gradient; gradient computation: symbolic or sampling **)
Given a pMC and a reward function defined over its output states, 
our goal is to synthesize a parameter valuation that maximizes the expected reward.

The two main approaches to tackle this optimization problem are
\emph{symbolic synthesis} such as parameter lifting, rational function generation and feasibility region decomposition~\citep{spel2021finding,dehnert2015prophesy,heck2025generalized},
with global optimality guarantees but that do not scale beyond dozens of parameters,
and \emph{gradient descent} (based on gradient computation algorithms),
with no global optimality guarantees but that scales to higher-dimensional parameter space.

Gradient computation algorithms themselves are subdivided into
exact methods such as the linear solver method~\citep{Heck22}
and approximate methods such as our Monte Carlo approach (Algorithm~\ref{alg:log_derivative_sampler}) and the Jacobi method.

Since our synthetic benchmark involved high-dimensional parameter spaces, 
to evaluate the empirical efficiency and precision of our proposed sampling-based framework, 
%we compare it with several gradient computation techniques for parametric Markov chains (pMCs).
we compare it with other gradient computation techniques 
for gradient computation and gradient descent parameter synthesis.
\begin{itemize}[nosep, leftmargin=1.5em]
  \item \textbf{\SparseLU{} and \BiCGSTAB{}:} Exact \emph{linear solver methods} for gradient computation, following~\citet{Heck22}. For solving the linear system~(\ref{eq:linearSystemHeck}), 
    we evaluate both a direct sparse LU factorization (\SparseLU{}) and an iterative Krylov subspace method (\BiCGSTAB{}).
  \item \textbf{\Jacobi{}:} 
    %A finite-horizon iterative approximation for the linear system~(\ref{eq:linearSystemHeck}); it truncates the evaluation after a fixed number $K$ of matrix-vector multiplication steps.
    Approximates the gradient by unrolling the linear system~(\ref{eq:linearSystemHeck}) for a fixed budget of $K$ matrix-vector multiplication steps,
    effectively truncating the computation to a finite horizon.
  \item \textbf{\LogDerivative{} (Ours):} 
    Our proposed sampling-based estimator, which computes an unbiased gradient estimate by simulating a budget of  N  paths using the path-space log-derivative trick.
  \item \textbf{\UniformPath{}:} 
    An importance-sampling baseline where successor states are drawn uniformly at random regardless of transition probabilities,
    parameterized by the number of sampled paths $N$. 
    Specifically, given $d$ outgoing transitions with probabilities $p_1, \dots, p_d$, 
    branch $i$ is traversed with uniform probability $\tfrac{1}{d}$ and reweighted by $d \cdot p_i$ to maintain unbiasedness.
\end{itemize}

\paragraph{Benchmarks}
We evaluate our approach on both synthetic discrete pMCs and continuous-state benchmarks.
For the discrete setting, we generate synthetic pMCs using a procedural generator in Python. 
Each model fixes an underlying sparse graph topology and assigns to each transition a degree-$1$ polynomial in a single parameter $X_i \in \{X_1, \dots, X_k\}$. 
The transitions are constructed such that outgoing probabilities algebraically sum to $1$ and evaluate to values in $(0,1)$ around an initial parameter value $p_0$. 
Sequential and cyclic paths naturally compose these local degree-$1$ transitions into non-trivial multivariate rational functions. 
The generated models are exported as \textsc{Prism} files; full details of the generation pipeline are provided in Appendix~\ref{appendix:benchmarkGeneration}. 
Continuous-state case studies (such as the model in Figure~\ref{fig:pmc_mixed}) are described in Figure~\ref{fig:RQ5}.

\paragraph{Implementation}%\label{subsec:impl}
All code, benchmarks, and experimental scripts are open-source and publicly available at \url{https://github.com/Gallinou/pMC-optimization-artifact}.
All  gradient computation methods along with the optimizers are implemented in C++ using sparse data structures. 
To ensure a fair runtime comparison under identical data layouts, we implemented the exact baseline solvers (\SparseLU{}, \BiCGSTAB{}) directly within this unified C++ framework.

Symbolic differentiation and evaluation of individual parametric transition expressions is handled using \texttt{SymEngine}\footnote{\url{https://github.com/symengine/symengine}}, 
whereas sparse linear systems are solved using \texttt{Eigen::Sparse}\footnote{\url{https://eigen.tuxfamily.org}} (utilizing both \texttt{SparseLU} and \texttt{BiCGSTAB}). 
All experiments were executed on an AWS EC2 \texttt{m6i.large} instance (x86\_64 architecture, 2 vCPUs powered by an Intel Xeon Platinum 8375C @ 3.5\,GHz, 8\,GiB RAM) constrained to a single core to ensure strictly deterministic, single-threaded execution.

\paragraph{RQ1: What is the Speed and Precision of Our Algorithm in Gradient Estimation?}
The primary motivation for our Monte Carlo sampler is to trade absolute precision for computational efficiency. 
We therefore evaluate: (1) whether it is faster than exact linear solver methods (\SparseLU{}, \BiCGSTAB{}), and (2) whether it remains sufficiently accurate, yielding gradient estimates close to the exact solutions.
Moreover, because computing the gradient scales poorly with the parameter dimension $k$, we assess runtime behavior as $k$ increases.

Figure~\ref{fig:RQ1} summarizes these results. Here, execution time denotes the duration required to compute the gradient at the initial parameter value $p_0$. 
As shown in the left plot, runtime increases for all methods as the number of parameters grows. 
The center plot displays runtime normalized against \SparseLU{} as a baseline, showing a clear performance advantage for the sampling-based methods (\LogDerivative{} and \UniformPath{}) over the linear solver methods (\SparseLU{}, \BiCGSTAB{}, \Jacobi{}), an advantage that widens considerably for larger parameter space dimensions.

The right-hand table reports the accuracy of each method. 
Because \SparseLU{} produces the exact gradient, we use its output as the ground truth $g$ and compare it against the estimated gradient $g'$ produced by each method using two different metrics:
\begin{itemize}[nosep, leftmargin=1.5em]
  \item \textbf{The Normalized Distance ($d$)} measures the relative error in both magnitude and direction:
    \[
      d(g, g') := \frac{\|g - g'\|_2}{\|g\|_2}.
    \]
    It penalizes scale mismatch and angular divergence (lower is better).

  \item \textbf{The Cosine Similarity ($\cos$)} evaluates the directional alignment irrespective of scale:
    \[
      \cos(g, g') := \frac{\langle g, g' \rangle}{\|g\|_2 \, \|g'\|_2} \in [-1, 1].
    \]
    Here, $1$ denotes identical direction (higher is better).
\end{itemize}
Between the two sampling-based methods, \LogDerivative{} consistently outperforms the precision of \UniformPath{}. 
In particular, with $10^5$ sample paths, \LogDerivative{} achieves a normalized distance of $0.050$ and a cosine similarity of $0.998$, demonstrating near-perfect directional alignment and high precision relative to the exact gradient.
\RQone

\paragraph{RQ2: What is the Overall Performance of Our Algorithm in Parameter Synthesis?}
Since our benchmark generates pMCs with high-dimensional parameter spaces,
for which symbolic methods do not scale well,
we compare the gradient computation methods---\SparseLU{}, \BiCGSTAB{}, \LogDerivative{}, \UniformPath{} and \Jacobi{}---
at gradient-based parameter synthesis.
Explicitly, we compare the reward improvement over execution time of full optimization runs across 100 benchmark pMCs with 5,000 states and 100 parameters under a strict 100-second wall-clock timeout.

Since the result is averaged over 100 pMCs, we introduce the notion of \emph{normalized relative reward} $\rho$.
%\item \textbf{Normalized Reward Improvement ($\rho$):} 
%        Quantifies the relative effectiveness of multi-step gradient optimization. 
Given an initial expected reward $r_0$, final expected rewards $\{r_m\}_{m \in M}$ achieved across all evaluated methods $m \in M$,
and the evaluated expected reward $r$, $\rho(r)$ is defined as
\[
  \rho(r) := \frac{r - r_0}{\max_{m \in M} (r_{m} - r_0)}
\]
%        Under this normalization, 
%        the best-performing optimizer on any given instance attains $\rho = 1.0$, 
%        while a method achieving half of that gain achieves $\rho = 0.5$, 
%        providing a scale-free metric that can be averaged across heterogeneous models.

Here, execution time accounts strictly for gradient estimation and optimizer steps. 
It excludes preprocessing (such as parsing \textsc{Prism} files and building internal sparse representations) and postprocessing (computing the exact reachability reward after each update via $x = (I - A)^{-1}b$ from \eqref{eq:linearSystemHeck}).

Because parameter values must remain within valid probability intervals (such as $[0, 1]$), standard unconstrained optimizers fail: as shown in Figure~\ref{fig:RQ2plus}, \texttt{AdaGrad}, \texttt{Adam}, and \texttt{AdamW} quickly drive parameters outside valid bounds and terminate within seconds. 
Standard \texttt{SGD} and step-clamped SGD (\texttt{SmallSGD}) remain inside the valid domain but make very slow progress. 
In contrast, constrained variants—\texttt{C-AdaGrad} and \texttt{C-AdamW} (with default learning rates $10^{-2}$ and $10^{-3}$)—consistently achieve fast reward improvement for both exact (\SparseLU{}) and sampling-based (\LogDerivative{}) gradients,
despite projecting values and no longer following the exact gradient direction.
We therefore use \texttt{C-AdaGrad} and \texttt{C-AdamW} as the underlying optimizers for our comparison.
\RQtwoplus

Figure~\ref{fig:RQ2} depicts the cumulative normalized relative reward improvement over time across the different exact and approximate gradient estimators. 
\LogDerivative{} substantially outperforms exact linear solver methods (\SparseLU{}, \BiCGSTAB{}) and the sampling baseline \UniformPath{}. 
While exact solvers compute true gradients, each step is slow: within 100 seconds, they execute very few steps and achieve minimal cumulative reward gain. 
Conversely, while \UniformPath{} executes quickly, its precision ceiling (observed in RQ3) limits its progress.

Crucially, low-sample configurations of our estimator (\LogDerivative{} with $N \in \{10, 10^2\}$) achieve the fastest initial reward gains. 
This illustrates a classic trade-off: ``the drunk guy will get home before the wise, though he might get stuck in front of his door.'' 
%(Arthur) We can remove it if you want, but I think its a good mental image to have in mind
With small $N$, the noisy updates are cheap enough to take hundreds of steps and quickly get close to the best reward, even if the noise prevents them from reaching the exact optimum. 
Larger sample sizes take fewer steps, but their lower noise allows them to get closer to the optimum.
\RQtwo

\paragraph{RQ3: How Does the Number of Sampled Paths $N$ Affect the Performance of Gradient Estimation?}
The two sampling-based methods (our proposed \LogDerivative{} and the baseline \UniformPath{}) rely on the number of sampled paths $N$ as their primary hyperparameter. 
We evaluate the sensitivity of gradient estimation runtime and precision for different values of $N$.

As shown in Figure~\ref{fig:RQ3}, the execution time of both sampling methods exhibits strongly sublinear growth with respect to the sample size $N$. Because transition derivatives are computed once and cached, increasing $N$ by five orders of magnitude (from $10$ to $10^6$) results in less than a two-orders-of-magnitude increase in runtime across all benchmark sizes.

Regarding estimation quality evaluated against the exact \SparseLU{} solution via normalized distance and cosine similarity, \UniformPath{} can be slightly more precise for very small $N$, presumably because uniform sampling may hit rare yet high-impact paths early on. However, despite its unbiasedness, its precision quickly plateaus whereas \LogDerivative{} exhibits no such ceiling and steadily converges to the exact gradient as $N$ increases.
\RQthree

\paragraph{RQ4: How Does the Number of States Affect Performance?}
Figure~\ref{fig:RQ4} illustrates scalability as the state space grows from $1$ to over $2.6 \times 10^5$ states. 
While exact numerical solvers such as \SparseLU{} suffer sharply from state-space explosion—exhibiting steep, polynomial runtime growth exceeding $200$ seconds at $2.6 \times 10^5$ states—\LogDerivative{} scales far more favorably, since its computational cost depends only on the average time required to simulate a path rather than on the total number of states or transitions.
Furthermore, as shown in the accompanying table, \LogDerivative{} maintains high estimation quality across these benchmarks, achieving a cosine similarity of $1.000$ (normalized distance $0.013$) at $N=10^6$.
\RQfour

\paragraph{RQ5: Can Our Algorithm Handle pMCs with Continuous State Spaces?}
Our theory, as well as our Algorithm~\ref{alg:log_derivative_sampler}, supports continuous pMCs. Here we experimentally confirm the algorithm's capability.
We evaluate our algorithm on two continuous pMC benchmarks: $T_{3}$ (the mixed discrete--continuous pMC from Figure~\ref{fig:pmc_mixed}) with a single parameter $p$, and a two-parameter continuous pMC $T_{4}$ with parameters $(p, q)$. To monitor optimization progress and verify optimality, we derived closed-form analytical expressions for the exact reachability rewards in both models.

The graphs in Figure~\ref{fig:RQ5} show that, indeed, our algorithm can optimize in continuous pMCs effectively and moreover converges to the theoretical maximum reward, respectively $2\smash{\sqrt{2}}-3$ at $p = \tfrac{1}{2}$ and $\smash{\tfrac{1}{e} - 1}$ at $(p, q) = \smash{\left(\tfrac{1}{e}, 1\right)}$, using a constrained-AdamW optimizer.
Note that existing methods such as symbolic synthesis~\citep{spel2021finding,dehnert2015prophesy,heck2025generalized} and exact linear solver methods~\citep{Heck22} require finite matrix representations and are not directly applicable to infinite or continuous state spaces.
\RQfive

\section{Conclusions and Future Work}

To the best of our knowledge, this establishes the first fully trajectory-sampling framework specifically tailored for verifying and optimizing terminating pMCs over general measurable state spaces.
By leveraging the log-derivative trick and generalizing it via Radon--Nikodym derivatives, our approach naturally accommodates both discrete and continuous state spaces with formal unbiasedness guarantees. 
Empirically, our results distinguish two key benefits: first, for gradient computation, our estimator yields accurate gradient approximations orders of magnitude faster than exact linear solvers; second, for end-to-end optimization, this speed--precision trade-off accelerates gradient descent, enabling rapid parameter synthesis on models far beyond the reach of exact techniques.

\paragraph{Future work}
%Future work includes extending this framework to general probabilistic programs. 
%Every probabilistic program can be viewed as a pMC whose state space is the product of the program counter and variable valuations, which often induces infinite or continuous state spaces. 
Because our estimator operates entirely on sampled execution traces without ever materializing this global state space, it provides a natural and scalable foundation for optimizing probabilistic programs and their compositions. 
On the foundational side, we also aim to develop a high-level, structural understanding of this framework, characterizing path-based gradient estimation through abstract categorical and compositional semantics.
On the practical side, we plan to explore applications in machine learning, such as discrete latent-variable architectures (e.g., discrete VAEs) and recurrent stochastic models.
Furthermore, because sampled paths are independent, the algorithm is naturally amenable to massive GPU parallelization, which could drastically accelerate computation.

Unlike prevailing automatic differentiation techniques that depend on continuous relaxations or reparameterization tricks to handle discrete randomness, our score-function approach requires no reparameterization and natively accommodates stochastic branching (\texttt{if}-statements) and loops (\texttt{while}-statements).

\subsection*{AI Use Statement}
We used generative AI tools solely to provide minor support for code optimization and manuscript polishing. 
We did not use AI tools to develop theoretical models, formulate mathematical claims, write proofs, or design experimental methodology. 
All generated suggestions were verified by the authors, and we take full responsibility for the entirety of the text, code, and theoretical results presented in this work.

\subsection*{Acknowledgment}
This work was supported by JST SPRING, Japan Grant Number JPMJSP2104.

\bibliography{iclr2027_conference}

@InProceedings{Heck22,
author="Heck, Linus
and Spel, Jip
and Junges, Sebastian
and Moerman, Joshua
and Katoen, Joost-Pieter",
editor="Finkbeiner, Bernd
and Wies, Thomas",
title="Gradient-Descent for Randomized Controllers Under Partial Observability",
booktitle="Verification, Model Checking, and Abstract Interpretation",
year="2022",
publisher="Springer International Publishing",
address="Cham",
pages="127--150",
isbn="978-3-030-94583-1"
}

@article{williams1992simple,
  title={Simple statistical gradient-following algorithms for connectionist reinforcement learning},
  author={Williams, Ronald J},
  journal={Machine learning},
  volume={8},
  number={3},
  pages={229--256},
  year={1992},
  publisher={Springer}
}

@misc{mohamed2020montecarlogradientestimation,
      title={Monte Carlo Gradient Estimation in Machine Learning}, 
      author={Shakir Mohamed and Mihaela Rosca and Michael Figurnov and Andriy Mnih},
      year={2020},
      eprint={1906.10652},
      archivePrefix={arXiv},
      primaryClass={stat.ML},
      url={https://arxiv.org/abs/1906.10652}, 
}

@book{barthe2020foundations,
  editor    = {Gilles Barthe and Joost-Pieter Katoen and Alexandra Silva},
  title     = {Foundations of Probabilistic Programming},
  publisher = {Cambridge University Press},
  year      = {2020},
  doi       = {10.1017/9781108770750}
}

@article{vandemeent2018introduction,
  author  = {Jan-Willem van de Meent and Brooks Paige and Hongseok Yang and Frank Wood},
  title   = {An Introduction to Probabilistic Programming},
  journal = {arXiv preprint arXiv:1809.10756},
  year    = {2018}
}

@inproceedings{barthe2012probabilistic,
  author    = {Gilles Barthe and Boris K{\"o}pf and Federico Olmedo and Santiago Zanella-B{\'e}guelin},
  title     = {Probabilistic Relational Reasoning for Differential Privacy},
  booktitle = {Principles of Programming Languages (POPL)},
  pages     = {97--110},
  publisher = {ACM},
  year      = {2012},
  doi       = {10.1145/2103656.2103670}
}

@inproceedings{roy2021learning,
  author    = {Subhajit Roy and Justin Hsu and Aws Albarghouthi},
  title     = {Learning Differentially Private Mechanisms},
  booktitle = {IEEE Symposium on Security and Privacy (S\&P)},
  pages     = {852--865},
  publisher = {IEEE},
  year      = {2021},
  doi       = {10.1109/SP40001.2021.00060}
}

@inproceedings{carbin2013verifying,
  author    = {Michael Carbin and Sasa Misailovic and Martin C. Rinard},
  title     = {Verifying Quantitative Reliability for Programs That Execute on Unreliable Hardware},
  booktitle = {Object-Oriented Programming, Systems, Languages \& Applications (OOPSLA)},
  pages     = {33--52},
  publisher = {ACM},
  year      = {2013},
  doi       = {10.1145/2509136.2509546}
}

@inproceedings{wang2021central,
  author    = {Di Wang and Jan Hoffmann and Thomas Reps},
  title     = {Central Moment Analysis for Cost Accumulators in Probabilistic Programs},
  booktitle = {Programming Language Design and Implementation (PLDI)},
  pages     = {559--573},
  publisher = {ACM},
  year      = {2021},
  doi       = {10.1145/3453483.3454062}
}

@article{bingham2019pyro,
  author  = {Eli Bingham and Jonathan P. Chen and Martin Jankowiak and Fritz Obermeyer and Neeraj Pradhan and Theofanis Karaletsos and Rohit Singh and Paul Szerlip and Paul Horsfall and Noah D. Goodman},
  title   = {Pyro: Deep Universal Probabilistic Programming},
  journal = {Journal of Machine Learning Research},
  volume  = {20},
  number  = {28},
  pages   = {1--6},
  year    = {2019}
}

@article{lew2023adev,
  author  = {Alexander K. Lew and Mathieu Huot and Sam Staton and Vikash K. Mansinghka},
  title   = {{ADEV}: Sound Automatic Differentiation of Expected Values of Probabilistic Programs},
  journal = {Proceedings of the ACM on Programming Languages},
  volume  = {7},
  number  = {POPL},
  pages   = {121--153},
  year    = {2023},
  doi     = {10.1145/3571198}
}

@Book{		  BaierK08,
  author	= {Baier, Christel and Katoen, Joost-Pieter},
  title		= {Principles of Model Checking},
  year		= 2008,
  isbn		= {026202649X, 9780262026499},
  publisher	= {The MIT Press}
}

@article{junges2024parameter,
  author  = {Sebastian Junges and Erika {\'A}brah{\'a}m and Christian Hensel and Nils Jansen and Joost-Pieter Katoen and Tim Quatmann and Matthias Volk},
  title   = {Parameter Synthesis for {M}arkov Models: Covering the Parameter Space},
  journal = {Formal Methods in System Design},
  volume  = {62},
  number  = {1},
  pages   = {181--259},
  year    = {2024},
  doi     = {10.1007/s10703-023-00442-x}
}

@inproceedings{daws2004symbolic,
  author    = {Conrado Daws},
  title     = {Symbolic and Parametric Model Checking of Discrete-Time {M}arkov Chains},
  booktitle = {Theoretical Aspects of Computing (ICTAC 2004)},
  series    = {LNCS},
  volume    = {3407},
  pages     = {280--294},
  publisher = {Springer},
  year      = {2005},
  doi       = {10.1007/978-3-540-31862-0_21}
}

@inproceedings{dehnert2015prophesy,
  author    = {Christian Dehnert and Sebastian Junges and Nils Jansen and Florian Corzilius and Matthias Volk and Harold Bruintjes and Joost-Pieter Katoen and Erika {\'A}brah{\'a}m},
  title     = {{PROPhESY}: A {PRO}babilistic {P}aram{E}ter {SY}nthesis Tool},
  booktitle = {Computer Aided Verification (CAV)},
  series    = {LNCS},
  volume    = {9206},
  pages     = {214--231},
  publisher = {Springer},
  year      = {2015},
  doi       = {10.1007/978-3-319-21690-4_13}
}

@article{cubuktepe2022convex,
  author  = {Murat Cubuktepe and Nils Jansen and Sebastian Junges and Joost-Pieter Katoen and Ufuk Topcu},
  title   = {Convex Optimization for Parameter Synthesis in {MDP}s},
  journal = {IEEE Transactions on Automatic Control},
  volume  = {67},
  number  = {12},
  pages   = {6333--6348},
  year    = {2022},
  doi     = {10.1109/TAC.2021.3133265}
}

@inproceedings{badings2023efficient,
  author    = {Thom Badings and Sebastian Junges and Ahmadreza Marandi and Ufuk Topcu and Nils Jansen},
  title     = {Efficient Sensitivity Analysis for Parametric Robust {M}arkov Chains},
  booktitle = {Computer Aided Verification (CAV)},
  series    = {LNCS},
  volume    = {13966},
  pages     = {62--85},
  publisher = {Springer},
  year      = {2023},
  doi       = {10.1007/978-3-031-37709-9_4}
}

@inproceedings{junges2018finite,
  author    = {Sebastian Junges and Nils Jansen and Ralf Wimmer and Tim Quatmann and Leonore Winterer and Joost-Pieter Katoen and Bernd Becker},
  title     = {Finite-State Controllers of {POMDP}s using Parameter Synthesis},
  booktitle = {Uncertainty in Artificial Intelligence (UAI)},
  pages     = {519--529},
  publisher = {AUAI Press},
  year      = {2018}
}

@inproceedings{polgreen2016data,
  author    = {Elizabeth Polgreen and Viraj B. Wijesuriya and Sofie Haesaert and Alessandro Abate},
  title     = {Data-Efficient {B}ayesian Verification of Parametric {M}arkov Chains},
  booktitle = {Quantitative Evaluation of Systems (QEST)},
  series    = {LNCS},
  volume    = {9826},
  pages     = {35--51},
  publisher = {Springer},
  year      = {2016},
  doi       = {10.1007/978-3-319-43425-4_3}
}

@inproceedings{volk2016advancing,
  author    = {Matthias Volk and Sebastian Junges and Joost-Pieter Katoen},
  title     = {Advancing Dynamic Fault Tree Analysis -- Get Succinct State Spaces Fast and Synthesise Failure Rates},
  booktitle = {Computer Safety, Reliability, and Security (SAFECOMP)},
  series    = {LNCS},
  volume    = {9922},
  pages     = {253--265},
  publisher = {Springer},
  year      = {2016},
  doi       = {10.1007/978-3-319-45477-1_20}
}

@inproceedings{bartocci2011model,
  author    = {Ezio Bartocci and Radu Grosu and Panagiotis Katsaros and C. R. Ramakrishnan and Scott A. Smolka},
  title     = {Model Repair for Probabilistic Systems},
  booktitle = {Tools and Algorithms for the Construction and Analysis of Systems (TACAS)},
  series    = {LNCS},
  volume    = {6605},
  pages     = {326--340},
  publisher = {Springer},
  year      = {2011},
  doi       = {10.1007/978-3-642-19835-9_30}
}

@article{ceska2017precise,
  author  = {Milan {\v{C}}e{\v{s}}ka and Frits Dannenberg and Nicola Paoletti and Marta Kwiatkowska and Lubo{\v{s}} Brim},
  title   = {Precise Parameter Synthesis for Stochastic Biochemical Systems},
  journal = {Acta Informatica},
  volume  = {54},
  number  = {6},
  pages   = {589--623},
  year    = {2017},
  doi     = {10.1007/s00236-016-0265-2}
}

@book{davis2006direct,
  author    = {Timothy A. Davis},
  title     = {Direct Methods for Sparse Linear Systems},
  series    = {Fundamentals of Algorithms},
  publisher = {SIAM},
  year      = {2006},
  doi       = {10.1137/1.9780898718881}
}

@book{saad2003iterative,
  author    = {Yousef Saad},
  title     = {Iterative Methods for Sparse Linear Systems},
  edition   = {2nd},
  publisher = {SIAM},
  year      = {2003},
  doi       = {10.1137/1.9780898718003}
}

@inproceedings{spel2021finding,
  author    = {Jip Spel and Sebastian Junges and Joost-Pieter Katoen},
  title     = {Finding Provably Optimal {M}arkov Chains},
  booktitle = {Tools and Algorithms for the Construction and Analysis of Systems (TACAS)},
  series    = {LNCS},
  volume    = {12651},
  pages     = {173--190},
  publisher = {Springer},
  year      = {2021},
  doi       = {10.1007/978-3-030-72016-2_10}
}

@inproceedings{heck2025generalized,
  author    = {Linus Heck and Tim Quatmann and Jip Spel and Joost-Pieter Katoen and Sebastian Junges},
  title     = {Generalized Parameter Lifting: Finer Abstractions for Parametric {M}arkov Chains},
  booktitle = {Automated Technology for Verification and Analysis (ATVA)},
  series    = {LNCS},
  pages     = {207--230},
  publisher = {Springer},
  year      = {2025},
  doi       = {10.1007/978-3-032-08707-2_10}
}

@ARTICLE{MarbachT01,
  author={Marbach, P. and Tsitsiklis, J.N.},
  journal={IEEE Transactions on Automatic Control}, 
  title={Simulation-based optimization of Markov reward processes}, 
  year={2001},
  volume={46},
  number={2},
  pages={191-209},
  doi={10.1109/9.905687}}

@article{BaxterB01,
  title={Infinite-Horizon Policy-Gradient Estimation},
  volume={15},
  ISSN={1076-9757},
  url={http://dx.doi.org/10.1613/jair.806},
  DOI={10.1613/jair.806},
  journal={Journal of Artificial Intelligence Research},
  publisher={AI Access Foundation},
  author={Baxter, J. and Bartlett, P. L.},
  year={2001},
  month=Nov, pages={319–350} }
\bibliographystyle{iclr2027_conference}

\appendix
\section{Benchmark Generation}\label{appendix:benchmarkGeneration}
We developed a scalable benchmark generator for pMCs.
The generation proceeds in two steps:
\begin{enumerate}[nosep, leftmargin=1.5em]
  \item \textbf{Topology Generation:} 
    We construct a finite, sparse directed graph ensuring that every state is \emph{accessible} from the initial state 
    and \emph{co-accessible} to at least one output state. 
    To scale up to large state spaces ($>10^5$ states), with non-trivial cyclic behavior, 
    we generate modular sparse subgraphs with cycles, duplicate them, 
    and link them with directed bridging transitions to guarantee global connectivity.
  \item \textbf{Parametric Transitions:} 
    For every state $s$, outgoing transition probabilities $\{P_{s, s'}\}_{s'}$ are parameterized by affine (degree-$1$) univariate polynomials over a parameter $X_i \in \{X_1, \dots, X_k\}$.
    These polynomials are constructed such that their formal sum satisfies $\sum_{s'} P_{s, s'}(X_i) = 1$.
    For a designated initial parameter point $p_0$, we ensure $P_{s, s'}(p_0) \in (0, 1)$ for all transitions. 
    By continuity, there exists an open neighborhood $B(p_0, \varepsilon)$ where all transition probabilities remain strictly in $(0, 1)$, ensuring the pMC remains well-defined and differentiable under local gradient updates.
\end{enumerate}
Although individual transitions use degree-$1$ polynomials over a single parameter, paths through cycles and branches combine these transitions into high-degree multivariate rational functions for reachability probabilities and expected rewards.
Our generator is implemented in Python and scales to models with $10^5$ states and millions of transitions in seconds. 
All generated models are exported as \textsc{Prism} files, ensuring that all tools evaluate the exact same benchmarks.

\end{document}